\documentclass{article}

\usepackage{PRIMEarxiv}

\usepackage[utf8]{inputenc}
\usepackage[T1]{fontenc}
\usepackage{hyperref}
\usepackage{url}
\usepackage{booktabs}
\usepackage{array}
\usepackage{amsfonts}
\usepackage{amsmath}
\usepackage{amsthm}
\usepackage{mathtools}
\usepackage{nicefrac}
\usepackage{microtype}
\usepackage{fancyhdr}
\usepackage{graphicx}
\graphicspath{{media/}}

\newtheorem{lemma}{Lemma}
\newtheorem{corollary}{Corollary}
\newtheorem{proposition}{Proposition}

\title{MeRoTune: RoPE-Safe Merging with a Tunable Dial}

\author{
  Salman Faroz \\
  \texttt{stsfaroz@gmail.com} \\
}

\begin{document}
\maketitle

\begin{abstract}
When you merge two fine-tuned models from the same base checkpoint by simply averaging their weights, you implicitly assume their attention subspaces are still aligned. Recent work \cite{scalinglmc2026} attempts to fix misalignments by learning an invertible correction matrix, $M$, for each model's query and key projections. This correction cancels out—using $M$ on the query side and $M^{-\top}$ on the key side—right before the dot product. However, this cancellation is only exact if nothing sits between the projection and the dot product. In reality, almost all modern open-weight language models put a rotary position embedding (RoPE) exactly there. In this paper, we show that this cancellation is exact under RoPE if and only if $M$ commutes with RoPE's per-position rotation. We derive the specific class of matrices where this holds: a scaled rotation acting independently within each RoPE frequency pair. This forms a strict, low-dimensional subset of the unconstrained matrices that current methods normally train. Building on this, we turn this constrained matrix class into a new merging method. While keeping the base weights entirely frozen, two fine-tunes each learn their own RoPE-compliant correction matrices. We optimize these corrections against a chosen blend ratio so the final result can be adjusted post-hoc like a dial, rather than locked into a single fixed merge. Our default approach trains at one fixed blend ratio, similar to how LoRA sets its scaling hyperparameter in advance \cite{hu2021lora}. We also experiment with resampling the blend ratio randomly at every training step, and we report the results of both approaches. 
Additionally, we define a closed mathematical condition to determine whether this alignment process is even worth doing in the first place—specifically, checking for a genuine, bidirectional capability gap between the two fine-tunes—and we empirically verify this condition before training instead of just assuming it. We test our method on a verified pair of Qwen2.5-1.5B-Instruct fine-tunes (an Indonesian/code specialist and a Japanese specialist). Our approach never loses to the mergekit reference implementations of TIES and DARE-TIES at any blend ratio tested. It also clears our own statistical noise floor on most benchmarks (with the exception of one, whose margins remain inside that floor throughout), whereas DARE-TIES actually falls behind the unmerged base model on at least one axis. Finally, we explore whether the tied blend ratio can be split into two independent scaling factors, finding that doing so reliably degrades performance across all benchmarks as both factors rise. Code, training data, and merge/evaluation scripts are publicly available \cite{releaserepo}.
\end{abstract}

\section{Introduction}

Model merging combines several independently fine-tuned copies of one
base checkpoint into a single set of weights, usually to obtain a model
competent at every fine-tune's specialty without the cost of serving
several models or retraining from scratch \cite{mergingsurvey2026}.
The dominant family of methods -- weight averaging, task arithmetic
\cite{ilharco2022taskarithmetic}, TIES \cite{yadav2023ties}, DARE
\cite{yu2023dare}, SLERP \cite{shoemake1985slerp}, Model Stock
\cite{jang2024modelstock} -- all operate directly on raw weight tensors.
None of them ask whether two fine-tunes' internal coordinate systems
for a given layer actually agree before combining them.

A separate line of work asks exactly that question for attention:
before averaging two models' query/key projections, learn a per-head
invertible correction $M$ and fold it in as $W_q \!\to\! MW_q$,
$W_k \!\to\! M^{-\top}W_k$. For any invertible $M$ this is an exact
no-op in isolation -- $(Mq)^\top(M^{-\top}k) = q^\top M^\top M^{-\top} k
= q^\top k$ -- so nothing is lost by inserting it, and averaging two
models' $M$-corrected projections can undo a rotational mismatch
averaging their raw projections cannot. Li and Shen use exactly this
construction, with $M$ left completely unconstrained (a Cayley-transform
orthogonal factor times a Cholesky-factor symmetric part), to scale
linear mode connectivity to billion-parameter transformers
\cite{scalinglmc2026}.

The cancellation $q^\top M^\top M^{-\top}k = q^\top k$ holds because
nothing sits between the projection and the dot product. Rotary
position embedding (RoPE) \cite{su2021roformer} -- used by Qwen
\cite{qwen2025technical}, Llama 3 \cite{touvron2024llama3}, and
essentially every current open-weight decoder -- inserts a
position-dependent rotation there on purpose, to make the attention
logit depend on relative position. An $M$ chosen without this in mind
has no reason to leave that rotation's action alone, and one that
does not commute with it changes the merged model's relative-position
behavior in a way invisible to a smooth training loss that never
checks for it. Section~\ref{sec:transforms} makes this precise and
derives the exact matrices for which the cancellation survives RoPE:
not all invertible $M$, but a specific $2$-parameter family acting
within each RoPE frequency pair.

We use this family -- which we call the \emph{RoPE-commutant} class --
to build MeRoTune (Merging + RoPE + Tunable): a dual-sided alignment
where each fine-tune learns its own RoPE-legal correction against a
blend ratio, so the result works as a runtime dial rather than a
single fixed-ratio checkpoint. Our default fixes that blend ratio
before training, the same way LoRA's own scaling hyperparameter is
fixed in advance \cite{hu2021lora}, rather than treating it as
something that must be resampled during training; we also tried
resampling it randomly and compare both. We also give a
closed condition, checked before any training is done, for when this
kind of correction addresses a real problem at all rather than
searching for structure that is not there. Contributions:
\begin{enumerate}
  \item A short, self-contained proof of which query/key corrections
    survive RoPE exactly (Section~\ref{sec:transforms}), and the
    dual-sided, alpha-scannable merging method built on it
    (Section~\ref{sec:method}).
  \item A mathematical statement of when two fine-tunes have a genuine,
    bidirectional capability gap worth aligning for, checked empirically
    on five candidate models before any training is committed
    (Section~\ref{sec:setup}) -- rejecting three superficially
    plausible pairs in the process.
  \item An evaluation against the official mergekit implementations of
    TIES and DARE-TIES \cite{mergekit,mergekitgithub} -- not plain
    averaging, which is itself the trivial ($M{=}I$) member of the same
    commutant class and assumes zero misalignment rather than
    correcting one -- across the blend-ratio range, a comparison
    between fixing the blend ratio during training (our default) and
    resampling it randomly, and an ablation testing whether the tied
    blend ratio can be relaxed into two independent scaling factors
    (Section~\ref{sec:results}).
\end{enumerate}

\section{Background}

\subsection{Model merging}
Given a base checkpoint with parameters $\theta_0$ and fine-tunes
$\theta_1,\dots,\theta_k$, task arithmetic \cite{ilharco2022taskarithmetic}
defines the merge as $\theta_0 + \sum_i \lambda_i(\theta_i-\theta_0)$.
TIES \cite{yadav2023ties} refines this by trimming each task vector to
its top-magnitude coordinates and electing a single sign per
coordinate before summing, to reduce destructive interference between
task vectors. DARE \cite{yu2023dare} randomly zeroes a large fraction
of each task vector's coordinates and rescales the rest before
combination (commonly applied on top of TIES, as DARE-TIES). SLERP
\cite{shoemake1985slerp} interpolates two checkpoints along the
geodesic on the weight sphere rather than linearly. All of these treat
each weight tensor as an undifferentiated blob of numbers; none ask
whether two tensors being combined are expressed in the same internal
basis.

\subsection{Rotary position embedding}
RoPE \cite{su2021roformer} splits each attention head's $d$-dimensional
query and key vectors into $d/2$ pairs and rotates pair $j$ at token
position $m$ by angle $m\theta_j$, with frequencies
$\theta_j = \mathrm{base}^{-2j/d}$. We use the split-half convention
(pair $j$ is coordinates $(j, j{+}d/2)$), matching Hugging Face's
implementation and, in particular, Qwen2.5. Writing $R(m\boldsymbol\theta)$
for the resulting block-diagonal rotation, the query and key at
positions $m,n$ are $q_m = R(m\boldsymbol\theta)W_qx_m$,
$k_n = R(n\boldsymbol\theta)W_kx_n$, and the attention logit is
\begin{equation}
  q_m^\top k_n = x_m^\top W_q^\top R(m\boldsymbol\theta)^\top R(n\boldsymbol\theta)
  W_k x_n = x_m^\top W_q^\top R\big((n{-}m)\boldsymbol\theta\big) W_k x_n,
  \label{eq:rope-logit}
\end{equation}
using $R(m\boldsymbol\theta)^\top R(n\boldsymbol\theta) = R((n{-}m)\boldsymbol\theta)$,
the rotation-composition identity that makes the logit a function of
relative position alone.

\section{RoPE-Compatible Query-Key Transforms}
\label{sec:transforms}

\subsection{The cancellation, and what breaks it}
Consider correcting one model's query/key projections with an
invertible matrix $M$ (per attention head, applied identically to
every head unless stated otherwise): $W_q \!\to\! MW_q$,
$W_k \!\to\! M^{-\top}W_k$. Without RoPE, the corrected logit is
$q^\top M^\top M^{-\top} k = q^\top(M^\top)(M^\top)^{-1}k = q^\top k$
exactly, for \emph{any} invertible $M$ -- this is the entire appeal of
the construction: it is a free, exact no-op in isolation, so folding it
into one model's weights before averaging with another model's
$M'$-corrected weights can only help, never hurt, that model's own
behavior.

With RoPE inserted between the projection and the dot product, using
\eqref{eq:rope-logit} with $q'=Mq$, $k'=M^{-\top}k$, the corrected
logit at relative position $\Delta=n-m$ becomes
\begin{equation}
  q^\top M^\top R(\Delta\boldsymbol\theta) M^{-\top} k,
  \label{eq:corrected-logit}
\end{equation}
which equals the uncorrected logit $q^\top R(\Delta\boldsymbol\theta)k$
for every $\Delta$ if and only if
$M^\top R(\Delta\boldsymbol\theta) M^{-\top} = R(\Delta\boldsymbol\theta)$
for every $\Delta$, i.e.\ $R(\Delta\boldsymbol\theta)$ commutes with
$M^{-\top}$ for every $\Delta$. Taking transposes and using
$R(\Delta\boldsymbol\theta)^\top = R(-\Delta\boldsymbol\theta)$ shows
this holds for every $\Delta$ exactly when $M$ itself commutes with
$R(\Delta\boldsymbol\theta)$ for every $\Delta$. An $M$ chosen without
this constraint -- as in \cite{scalinglmc2026} -- cancels exactly only
in the degenerate case $M \propto I$; for a generic learned $M$,
\eqref{eq:corrected-logit} silently distorts the relative-position
dependence RoPE is there to provide, in a way a training loss that
never re-derives \eqref{eq:rope-logit} cannot see.

\subsection{The commutant of a single RoPE pair}

\begin{proposition}
\label{prop:pair-commutant}
Let
\[
  R(\theta) = \begin{pmatrix}\cos\theta & -\sin\theta \\ \sin\theta & \cos\theta\end{pmatrix}
\]
for some fixed $\theta \notin \{0,\pi\} \pmod{2\pi}$. A real $2{\times}2$
matrix $M$ satisfies $MR(\theta) = R(\theta)M$ if and only if
$M = aI + bJ$ for some $a,b \in \mathbb{R}$, where
\[
  J = \begin{pmatrix}0&1\\-1&0\end{pmatrix}.
\]
\end{proposition}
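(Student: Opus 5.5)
The plan is to reduce the commutation condition to commutation with $J$ alone, and then solve a small linear system. First observe that $R(\theta) = \cos\theta\, I - \sin\theta\, J$, since $J$ has $+1$ in the upper-right entry and $-1$ in the lower-left. Because $I$ commutes with everything, we get
\[
MR(\theta) - R(\theta)M = -\sin\theta\,(MJ - JM).
\]
The hypothesis $\theta \notin \{0,\pi\} \pmod{2\pi}$ is exactly the condition $\sin\theta \neq 0$. So $M$ commutes with $R(\theta)$ if and only if $MJ = JM$.

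Next, I solve $MJ = JM$ directly. Write $M = \begin{pmatrix} p & q \\ r & s\end{pmatrix}$ and compute both products:
\[
MJ = \begin{pmatrix} -q & p \\ -s & r\end{pmatrix}, \qquad JM = \begin{pmatrix} r & s \\ -p & -q\end{pmatrix}.
\]
Equating entries gives $r = -q$ and $s = p$; the remaining two equations are the same conditions. Hence
\[
M = \begin{pmatrix} p & q \\ -q & p\end{pmatrix} = pI + qJ,
\]
which gives the forward direction with $a = p$ and $b = q$.

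For the converse, note that $aI + bJ$ is a polynomial in $J$, so it commutes with $J$. It therefore commutes with $R(\theta) = \cos\theta\, I - \sin\theta\, J$. This direction holds for every $\theta$, with no restriction needed.

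There is no real obstacle here. The only points needing care are the sign convention relating $R(\theta)$ to $J$, and making explicit that the excluded angles are precisely where $\sin\theta = 0$. At those angles $R(\theta) = \pm I$ and the commutant is all of $\mathbb{R}^{2\times 2}$, which shows the hypothesis is necessary.
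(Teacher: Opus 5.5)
Your proof is correct and is essentially the paper's argument: the paper expands $MR(\theta)=R(\theta)M$ entrywise to get $(q+r)\sin\theta=0$ and $(s-p)\sin\theta=0$, which are exactly $\sin\theta$ times your equations from $MJ=JM$, so first writing $R(\theta)=\cos\theta\,I-\sin\theta\,J$ is just a cleaner packaging that makes the role of $\sin\theta\neq 0$ and the converse transparent. Your sign convention is also the right one: $J=R(-\pi/2)$, not $R(\pi/2)$ as the paper's converse states, though that slip is harmless since both are rotations.
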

\begin{proof}
($\Leftarrow$) $J = R(\pi/2)$, and rotations commute with each other,
so $aI+bJ$ commutes with $R(\theta)$ for every $\theta$.
($\Rightarrow$) Write
\[
  M=\begin{pmatrix}p&q\\r&s\end{pmatrix}, \qquad c=\cos\theta,\quad u=\sin\theta.
\]
Expanding $MR(\theta)=R(\theta)M$
entrywise gives $(q{+}r)u=0$ from the $(1,1)$ and $(2,2)$ entries and
$(s{-}p)u=0$ from the $(1,2)$ and $(2,1)$ entries. Since
$\theta\notin\{0,\pi\}$, $u\neq0$, forcing $r=-q$ and $s=p$, i.e.\
$M = pI + qJ$.
\end{proof}

Because $J^2=-I$, the algebra $\{aI+bJ\}$ is isomorphic to $\mathbb{C}$
(with $J$ playing the role of $i$): commutant elements are exactly
scaled rotations, det $= a^2+b^2 \ge 0$, so every one of them (other
than $a=b=0$) is invertible and orientation-preserving.

\subsection{The commutant of a full RoPE head}
RoPE applies $d/2$ independent rotations $R(m\theta_j)$, one per
frequency pair, with $\theta_j=\mathrm{base}^{-2j/d}$ for
$j=0,\dots,d/2{-}1$ and a standard base (e.g.\ $10^4$). For any such
base and any $d$ used in practice, these frequencies are pairwise
distinct, and since each $\theta_j\in(0,1]$ (radians), no two of them
can sum to a multiple of $2\pi$ either -- both properties used below.
$M\in\mathbb{R}^{d\times d}$ commuting with $R(\Delta\boldsymbol\theta)$
for every integer $\Delta$ is, blockwise, $d/2\times d/2$ conditions of
the form $M_{jk}R(\Delta\theta_k)=R(\Delta\theta_j)M_{jk}$ (one
$2{\times}2$ block per pair $(j,k)$); the diagonal blocks ($j{=}k$) are
exactly Proposition~\ref{prop:pair-commutant}, and the off-diagonal
ones ($j{\neq}k$) are ruled out entirely by the following.

\begin{lemma}
\label{lem:cross-block}
Let $\alpha,\beta\in\mathbb{R}$ with $\alpha\not\equiv\beta\pmod{2\pi}$
and $\alpha+\beta\not\equiv0\pmod{2\pi}$. The only real $2{\times}2$
matrix $N$ satisfying $NR(\Delta\beta)=R(\Delta\alpha)N$ for every
integer $\Delta$ is $N=0$.
\end{lemma}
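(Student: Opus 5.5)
The plan is to reduce the family of conditions to the single case $\Delta=1$ and then solve it with the same complex-number picture used after Proposition~\ref{prop:pair-commutant}. If $N=0$ is forced by $NR(\beta)=R(\alpha)N$ alone, the lemma follows, since the remaining $\Delta$ only add constraints; the lemma as stated has $\Delta=1$ among its integers. So I will show that $NR(\beta)=R(\alpha)N$ with $\alpha\not\equiv\pm\beta \pmod{2\pi}$ implies $N=0$.

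\textbf{Complex coordinates.} Identify $\mathbb{R}^2$ with $\mathbb{C}$ via $(x,y)\mapsto z=x+iy$. Under this identification $R(\theta)$ is multiplication by $e^{i\theta}$. Every real-linear map $\mathbb{C}\to\mathbb{C}$, and hence every real $2\times2$ matrix $N$, can be written uniquely as
\[
  Nz = pz + q\bar z, \qquad p,q\in\mathbb{C}.
\]
Here the $p$ part is the $\mathbb{C}$-linear piece, i.e.\ the span of $I$ and $J$ from Proposition~\ref{prop:pair-commutant}. The $q$ part is the conjugate-linear piece, spanned by reflections. This is a four-real-parameter decomposition matching $\dim \mathbb{R}^{2\times2}=4$, which I would verify by a one-line dimension count.

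\textbf{Comparing coefficients.} Substitute into the intertwining relation. The left side is
\[
  N(e^{i\beta}z)=pe^{i\beta}z+qe^{-i\beta}\bar z,
\]
and the right side is
\[
  e^{i\alpha}Nz=pe^{i\alpha}z+qe^{i\alpha}\bar z.
\]
Since $z$ and $\bar z$ are independent as real-linear functionals, uniqueness of the decomposition lets me compare coefficients:
\[
  p\,(e^{i\beta}-e^{i\alpha})=0, \qquad q\,(e^{-i\beta}-e^{i\alpha})=0.
\]
The hypothesis $\alpha\not\equiv\beta$ gives $e^{i\alpha}\neq e^{i\beta}$, so $p=0$. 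The hypothesis $\alpha+\beta\not\equiv0$ gives $e^{i\alpha}\neq e^{-i\beta}$, so $q=0$. Therefore $N=0$.

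\textbf{Cross-check.} Equivalently, this is the Sylvester-equation fact that $AX=XB$ has only the trivial solution when $\mathrm{spec}(A)\cap\mathrm{spec}(B)=\emptyset$. Here the spectra are $\{e^{\pm i\alpha}\}$ and $\{e^{\pm i\beta}\}$, which are disjoint exactly under the two stated hypotheses. I would mention this in a remark rather than rely on it.

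\textbf{Main obstacle.} The only delicate point is bookkeeping. I need to confirm that the rotation convention in Proposition~\ref{prop:pair-commutant} really corresponds to multiplication by $e^{+i\theta}$ and not $e^{-i\theta}$, and that the conjugate-linear part picks up $e^{-i\beta}$. A sign slip there would swap which hypothesis kills which coefficient, though the conclusion is symmetric. I would check the correspondence directly on the basis vector $z=1$, which maps to $(\cos\theta,\sin\theta)$.
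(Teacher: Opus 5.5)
Your proposal is correct and follows the paper's proof essentially line for line. You use the same identification of $\mathbb{R}^2$ with $\mathbb{C}$ and the same unique decomposition $Nz=pz+q\bar z$, and you compare coefficients at $\Delta=1$ alone, which forces $p=0$ from $\alpha\not\equiv\beta$ and $q=0$ from $\alpha+\beta\not\equiv 0$. Your Sylvester-equation remark (disjoint spectra $\{e^{\pm i\alpha}\}$ and $\{e^{\pm i\beta}\}$) is a valid independent cross-check that the paper does not include.
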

\begin{proof}
Identify $\mathbb{R}^2$ with $\mathbb{C}$ via $(x,y)\mapsto x+iy$; under
this identification $R(\theta)$ acts as multiplication by $e^{i\theta}$,
and every real-linear map on $\mathbb{C}$ is uniquely $z\mapsto
pz+q\bar z$ for some $p,q\in\mathbb{C}$ -- a relabeling of $N$'s four
real entries as two complex ones. The hypothesis
$N(e^{i\Delta\beta}z)=e^{i\Delta\alpha}N(z)$ for every $z\in\mathbb{C}$
expands to $pe^{i\Delta\beta}z+qe^{-i\Delta\beta}\bar z =
pe^{i\Delta\alpha}z+qe^{i\Delta\alpha}\bar z$; since this $(z,\bar z)$
decomposition is unique, the $z$- and $\bar z$-coefficients must match
separately:
\[
  p\,e^{i\Delta\beta} = p\,e^{i\Delta\alpha}, \qquad
  q\,e^{-i\Delta\beta} = q\,e^{i\Delta\alpha}, \qquad \text{for every integer }\Delta.
\]
Already at $\Delta{=}1$, the first equation forces $p=0$ (as
$\alpha\not\equiv\beta\pmod{2\pi}$) and the second forces $q=0$ (as
$\alpha+\beta\not\equiv0\pmod{2\pi}$), so $N=0$.
\end{proof}

Lemma~\ref{lem:cross-block} applies to every off-diagonal block
$M_{jk}$ ($\alpha{=}\theta_j$, $\beta{=}\theta_k$, both hypotheses
satisfied by the distinctness and boundedness noted above), forcing
$M_{jk}=0$. Hence the RoPE-commutant of a full head is exactly the set
of block-diagonal matrices with $d/2$ independent $2{\times}2$
scaled-rotation blocks -- a $d$-dimensional subspace of the
$d^2$-dimensional space of invertible matrices Li and Shen's
construction searches over.

\begin{corollary}
\label{cor:solo-invariance}
If $M$ is in the RoPE-commutant class, applying the correction to one
model in isolation ($M$ on $W_q$, $M^{-\top}$ on $W_k$, nothing on the
other side of a merge) reproduces that model's original outputs
exactly, for \emph{any} choice of $M$'s parameters. The correction has
no effect until it is combined with a \emph{different} model's
correction.
\end{corollary}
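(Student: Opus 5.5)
The plan is to reduce the corollary to the per-logit identity of Section~\ref{sec:transforms} and then push it through the rest of the network. Fix one attention head and one model, and replace $W_q\to MW_q$, $W_k\to M^{-\top}W_k$ with $M$ in the RoPE-commutant class and invertible. Invertibility holds as long as every $2\times2$ block $a_jI+b_jJ$ has $(a_j,b_j)\neq(0,0)$, since each such block has determinant $a_j^2+b_j^2>0$ by the remark after Proposition~\ref{prop:pair-commutant}. The corrected query and key at positions $m,n$ are $q'_m=R(m\boldsymbol\theta)MW_qx_m$ and $k'_n=R(n\boldsymbol\theta)M^{-\top}W_kx_n$. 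I would first show that $M^{-\top}$ also commutes with every $R(\Delta\boldsymbol\theta)$. This is direct: $M^\top$ is block-diagonal with blocks $a_jI-b_jJ$, which are again commutant elements. Inverses of commutant elements are also commutant elements, because the set $\{aI+bJ\}$ is a field isomorphic to $\mathbb{C}$ and this holds blockwise.

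The key step is the logit computation. Using $R(m\boldsymbol\theta)^\top=R(-m\boldsymbol\theta)$ and commutation, I would write
\[
q_m'^\top k_n' = x_m^\top W_q^\top M^\top R\big((n{-}m)\boldsymbol\theta\big) M^{-\top} W_k x_n = x_m^\top W_q^\top R\big((n{-}m)\boldsymbol\theta\big) W_k x_n .
\]
The second equality holds because $M^\top$ commutes with $R(\Delta\boldsymbol\theta)$, so $M^\top R M^{-\top}=R\,M^\top M^{-\top}=R$. By \eqref{eq:rope-logit}, the result is exactly the original logit. This is the converse direction of the equivalence already derived around \eqref{eq:corrected-logit}, now instantiated for the full-head commutant. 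It holds for every pair of positions $(m,n)$, and not only for integer $\Delta$, since RoPE angles are linear in position and commutation holds for every angle.

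Next I would propagate the identity. Every attention logit in every head is unchanged, so the softmax weights are unchanged, including any causal mask or scaling by $1/\sqrt d$, which act on the logits alone. Values, $W_o$, MLPs and norms are untouched, so by induction over layers each hidden state, and hence the output distribution, is identical. If $M$ differs per head or per layer, the argument applies head by head, because heads interact only through their concatenated outputs. The main technical obstacle I expect is in the implementation rather than the algebra: grouped-query attention shares one $W_k$ across several query heads. Here I would require a single $M$ per KV group. Otherwise each query head's $M^{-\top}$ must be applied to the same key, which is generally impossible. I would state this as an assumption.

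Finally, the claim that there is "no effect until combined" follows because the map from fine-tune to output is invariant under the correction for any parameters. Only when the corrected weights $MW_q$ are averaged with a different model's $M'W_q'$ does the nonlinearity of the product in the logit, where the average of $M$ and $M'$ no longer cancels against the average of $M^{-\top}$ and $M'^{-\top}$, make the parameters matter.
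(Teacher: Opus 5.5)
Your proposal is correct and is essentially the paper's argument. The paper's proof is the one-line observation that the corrected logit \eqref{eq:corrected-logit} collapses to $q^\top R(\Delta\boldsymbol\theta)k$ because $M^\top$ (blockwise $a_jI-b_jJ$) commutes with every $R(\Delta\boldsymbol\theta)$; your propagation through softmax and the untouched layers, the $(a_j,b_j)\neq(0,0)$ invertibility caveat, and the one-$M$-per-key/value-head requirement under GQA make explicit what the paper leaves implicit. One small addition is needed for the output-level claim on Qwen2.5: the $q$/$k$ projection biases must also be mapped by $M$ and $M^{-\top}$, as the paper does in its merge operator, since otherwise the cross terms in the logit generally change.
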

This follows directly from \eqref{eq:corrected-logit} being an
identity for commuting $M$. It matters for Section~\ref{sec:method}:
a learned RoPE-commutant transform cannot, by construction, change a
single model's own behavior -- whatever it learns can only be about
how to reconcile two models' bases with each other.

\begin{corollary}
\label{cor:conditioning}
Every nonzero $2{\times}2$ block $M=aI+bJ$ of the RoPE-commutant class
has condition number exactly $1$: both singular values equal
$\sqrt{a^2+b^2}$, for every $(a,b)$ gradient descent could ever reach,
not just at initialization.
\end{corollary}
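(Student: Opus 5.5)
The plan is a direct computation of $M^\top M$, whose eigenvalues are the squared singular values of $M$. Take a block $M = aI + bJ$ with $(a,b)\neq(0,0)$. Then $M^\top = aI + bJ^\top = aI - bJ$, because $J^\top = -J$ (as one sees from the explicit matrix in Proposition~\ref{prop:pair-commutant}). Multiplying out,
\[
  M^\top M = (aI - bJ)(aI + bJ) = a^2 I + ab\,J - ab\,J - b^2 J^2 = (a^2+b^2)\,I .
\]
The cross terms cancel because $I$ and $J$ commute, and the last term becomes $+b^2 I$ by the identity $J^2=-I$ already noted after Proposition~\ref{prop:pair-commutant}.

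From this the singular values follow at once. $M^\top M$ is the scalar matrix $(a^2+b^2)I$, so both of its eigenvalues equal $a^2+b^2$. The singular values of $M$ are therefore both $\sqrt{a^2+b^2}$, and this quantity is strictly positive since the block is nonzero. Their ratio, which is the condition number $\sigma_{\max}/\sigma_{\min}$, is exactly $1$.

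An equivalent route is to write $M = \sqrt{a^2+b^2}\,R(\phi)$, a positive multiple of a rotation; this is the "scaled rotation" reading given after Proposition~\ref{prop:pair-commutant}. An orthogonal matrix has all singular values equal to $1$, and scaling multiplies each of them by the same factor.

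The "every $(a,b)$ gradient descent could ever reach" clause needs no dynamical argument. Any parametrization of the class is the pair $(a,b)$ itself, so every iterate is of the form $aI+bJ$, and the computation above holds pointwise for all such $(a,b)$ except the origin. The one genuine caveat is the excluded point $a=b=0$, where $M$ is singular and the condition number is undefined. That case is ruled out by the "nonzero" hypothesis in the statement, not by anything about the optimizer. I would state this explicitly rather than claim that gradient descent can never reach the origin.

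The only step that needs any care is $J^\top = -J$ together with the resulting cancellation of the cross terms. The rest is immediate.
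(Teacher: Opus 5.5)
Your proof is correct and follows the paper's own argument: you compute $M^\top M=(aI-bJ)(aI+bJ)=(a^2+b^2)I$ using $J^\top=-J$ and $J^2=-I$, and conclude that both singular values equal $\sqrt{a^2+b^2}$. Your remark that the origin is excluded by the nonzero hypothesis, not by any property of the optimizer, is a useful clarification of the statement's gradient-descent clause.
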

\begin{proof}
$J^\top=-J$ and $J^2=-I$, so
$M^\top M = (aI-bJ)(aI+bJ) = a^2I + ab(J{-}J) - b^2J^2 = (a^2+b^2)I$.
A matrix with $M^\top M$ a scalar multiple of $I$ has both singular
values equal to the square root of that scalar.
\end{proof}
This class can never become ill-conditioned during training regardless
of where $(a_j,b_j)$ drift, in contrast to a general invertible
correction (as in \cite{scalinglmc2026}), whose conditioning is
unconstrained and can in principle degrade arbitrarily as its
parameters move.

\section{MeRoTune}
\label{sec:method}

Figure~\ref{fig:overview} sketches the method end to end; the rest of
this section derives and justifies each piece of it.

\begin{figure}[h]
\centering
\includegraphics[width=\textwidth]{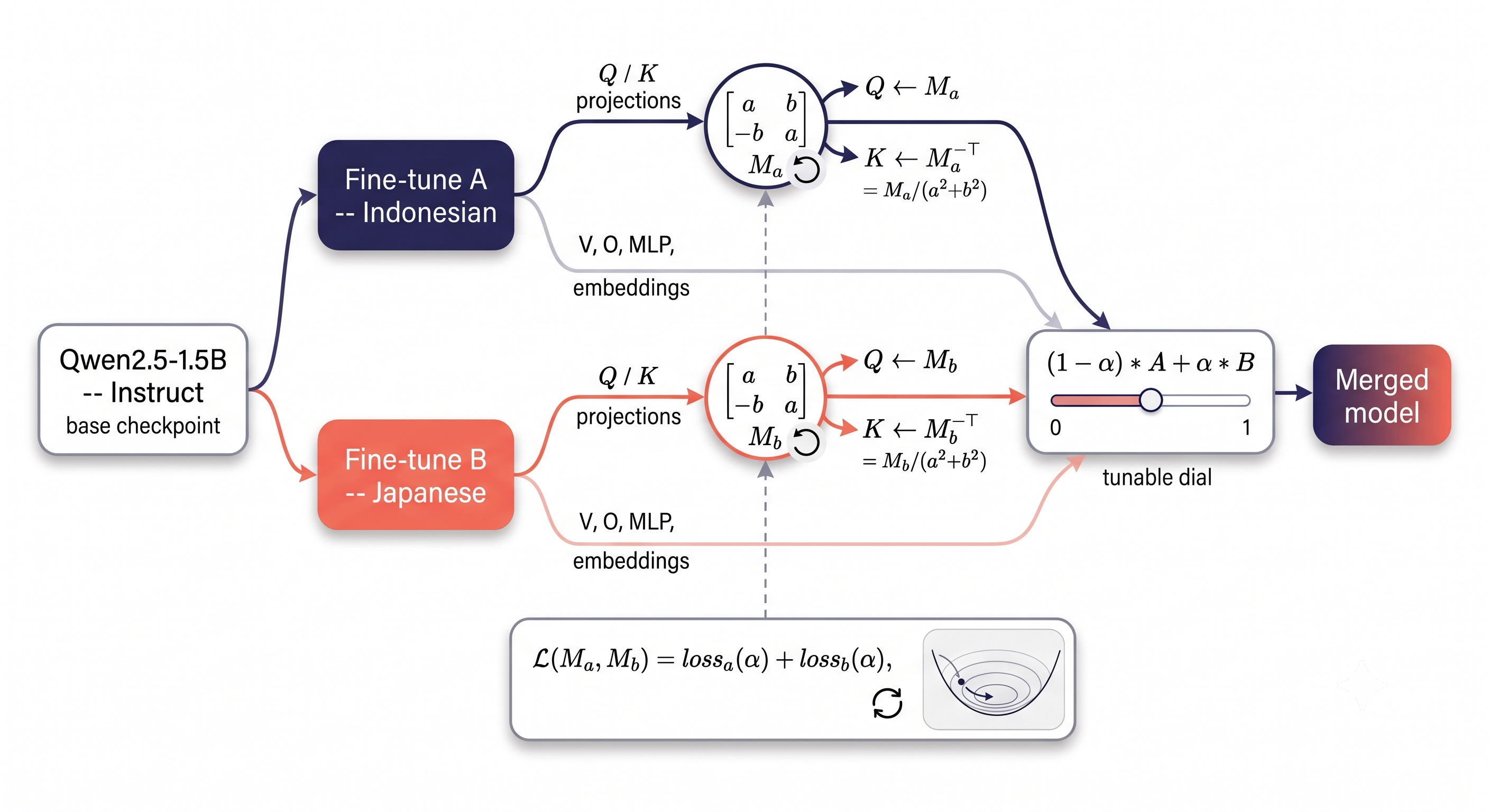}
\caption{MeRoTune end to end. Each fine-tune's Q/K projections pass
through its own learned RoPE-commutant transform ($M_a$ or $M_b$,
Section~\ref{sec:transforms}); every other tensor is untouched at this
stage. The two transformed models are then combined at a chosen blend
ratio $\alpha$ (Eqs.~\eqref{eq:mergeK}--\eqref{eq:mergeQ}), a dial that
can be swept after training rather than fixed at merge time.}
\label{fig:overview}
\end{figure}

Two structural conditions must hold before any of the following
applies. First, $A$ and $B$ must be full-parameter fine-tunes of the
\emph{same} base checkpoint -- identical architecture, tokenizer, and
vocabulary -- so that every weight tensor in
\eqref{eq:mergeK}--\eqref{eq:mergeQ} is shape-compatible between the
two models; an adapter-only checkpoint with no merged dense weights,
or a fine-tune that alters the tokenizer or vocabulary size, is not
mergeable this way regardless of how well it performs. Second, the
architecture must actually use RoPE, since that is what
Section~\ref{sec:transforms} constrains against. Both conditions were
applied as a hard filter on candidate fine-tunes before any benchmark
was run (Section~\ref{sec:setup}): a Russian candidate was excluded for
a replaced tokenizer (vocabulary size $145{,}152$ against the base's
$151{,}936$), a German PII-redaction candidate for being distributed
as a LoRA adapter with no merged checkpoint, and a diffusion-language-model
candidate for using a different model class entirely. Only checkpoints
passing this structural filter were evaluated against the statistical
condition, \eqref{eq:mutualgap} below.

\subsection{Parameterization}
We parameterize the commutant class per (layer, key/value head) with
one scalar pair $(a_j,b_j)$ for each of the $d/2$ RoPE frequency
pairs, $M_j = \begin{psmallmatrix}a_j&b_j\\-b_j&a_j\end{psmallmatrix}$,
initialized to $a_j{=}1,b_j{=}0$ (identity). This is
\texttt{ConstrainedM} in the released code \cite{releaserepo}; for
Qwen2.5-1.5B-Instruct ($28$ layers, $2$ key/value heads, head dim
$128$) it has $28\times2\times64\times2=7168$ trainable scalars in
total. Table~\ref{tab:paramcount} compares this against the two
unconstrained alternatives it is meant to replace: a raw, freely
trained per-head matrix (the natural LoRA-style patch), and Li and
Shen's own $QP$ parameterization \cite{scalinglmc2026}, which uses the
same $d^2$ real degrees of freedom per head as the raw matrix -- its
Cayley/Cholesky construction buys a particular kind of numerical
convenience, not a smaller search space. Only the RoPE-commutant
class gets its safety guarantee (Proposition~\ref{prop:pair-commutant})
and perfect conditioning (Corollary~\ref{cor:conditioning}) from the
parameterization itself, at roughly $128\times$ fewer parameters per
head.

\begin{table}[h]
\centering
\caption{Per-head query/key correction, three constructions.}
\label{tab:paramcount}
\begin{tabular}{lccc}
\toprule
construction & params/head & RoPE-safe & condition number \\
\midrule
ConstrainedM (ours)          & $d=128$   & yes, by construction & exactly $1$ \\
raw unconstrained matrix     & $d^2{=}16{,}384$ & no & unbounded \\
$QP$ \cite{scalinglmc2026}   & $d^2{=}16{,}384$ & no & not guaranteed \\
\bottomrule
\end{tabular}
\end{table}

\subsection{Dual-sided merge operator}
We adopt Li and Shen's dual-sided matching \cite{scalinglmc2026} --
both models learn their own correction rather than one being fixed as
an anchor -- but neither their $M$ nor their matching is RoPE-legal:
our version restricts $M_a,M_b$ to the commutant class of
Section~\ref{sec:transforms}, and trains them against the blend ratio
$\alpha$ of Section~\ref{sec:training} so the pair works as a dial at
merge time rather than a fixed-point correction. Concretely, $A$ and $B$ each get their own transform,
$M_a,M_b$; for a chosen blend ratio $\alpha\in[0,1]$, the merged
query/key weights at a given (layer, head) are
\begin{align}
  W_k^{\mathrm{merge}}(\alpha) &= (1-\alpha)\,M_a^{-\top}W_k^A
    + \alpha\,M_b^{-\top}W_k^B, \label{eq:mergeK}\\
  W_q^{\mathrm{merge}}(\alpha) &= (1-\alpha)\,M_aW_q^A
    + \alpha\,M_bW_q^B, \label{eq:mergeQ}
\end{align}
with $k/q$-proj biases treated identically (they are added before
RoPE, so they ride along with the weight they sit next to), and every
other tensor (embeddings, $V$/$O$ projections, MLP, norms) plain
linearly interpolated at the same $\alpha$. By
Corollary~\ref{cor:solo-invariance}, $\alpha{=}0$ reproduces fine-tune
$A$ exactly and $\alpha{=}1$ reproduces fine-tune $B$ exactly,
regardless of the learned values of $M_a,M_b$; the interior of $[0,1]$
is where the two models' K/Q subspaces, each first rotated into a
jointly-learned shared frame, are actually combined. Plain weight
averaging is the special case $M_a=M_b=I$: a legal member of the same
commutant class, but one that \emph{assumes} the two models' frames
already agree rather than searching for and correcting a real
misalignment. When that assumption happens to hold it can look
perfectly adequate, which is precisely the failure mode: nothing in
plain averaging detects whether it holds. If the two models' Q/K
frames genuinely disagree, Section~\ref{sec:transforms} already shows
what happens next -- averaging corrects nothing, and the mismatch
distorts RoPE's relative-position structure in exactly the way
\eqref{eq:corrected-logit} makes precise, invisibly to any training
loss that never re-checks it.

This kind of correction is not always worth applying. In plain terms:
it is worth it only when each fine-tune is genuinely good at its own
specialty \emph{and} genuinely not good at the other's -- not when a
fine-tune is mediocre even at its own job, and not when one fine-tune
already beats the other at both. The condition that captures this
exactly is stated next.
\label{sec:mutual-gap}
Let $D_A,D_B$ be held-out sets for $A$'s and $B$'s specialties and
$L(\theta;D)$ a task error rate. Define, relative to the shared base
$\theta_0$,
\begin{equation}
  \Delta_A = L(\theta_0;D_A)-L(\theta_A;D_A), \quad
  \Delta_B = L(\theta_0;D_B)-L(\theta_B;D_B),
\end{equation}
\begin{equation}
  \Gamma_A = L(\theta_0;D_B)-L(\theta_A;D_B), \quad
  \Gamma_B = L(\theta_0;D_A)-L(\theta_B;D_A).
\end{equation}
$\Delta_A,\Delta_B$ are each fine-tune's own-domain gain over base;
$\Gamma_A,\Gamma_B$ are their cross-domain shift on the \emph{other}
fine-tune's task. A pair is a genuine target for alignment-then-merge
exactly when
\begin{equation}
  \Delta_A>0,\ \ \Delta_B>0,\ \ \Gamma_A\le0,\ \ \Gamma_B\le0:
  \label{eq:mutualgap}
\end{equation}
each model is a real improvement on its own specialty and no better
(typically worse) on the other's. If \eqref{eq:mutualgap} fails --
$\Delta_i\le0$ for some $i$ (a fine-tune that is not actually better
than base at anything, an ``alignment tax'') or $\Gamma_i>0$ for some
$i$ (one fine-tune already dominates the other on both tasks) -- there
is no real rotational disagreement to correct: the two models are not
each carrying genuine, conflicting specialization, and any learned
$M_a,M_b$ can only add parameters and training noise to a problem that
does not exist. We check \eqref{eq:mutualgap} empirically for every
candidate pair \emph{before} committing to training
(Section~\ref{sec:setup}).

\subsection{Training objective}
\label{sec:training}
With $\theta(\alpha;M_a,M_b)$ denoting the parameters assembled by
\eqref{eq:mergeK}--\eqref{eq:mergeQ} at blend ratio $\alpha$, and
$\ell$ the model's own next-token loss on answer tokens, training
minimizes
\begin{equation}
  \mathcal{L}(M_a,M_b) = \mathbb{E}_{\alpha\sim U[\alpha_{\min},\alpha_{\max}]}
  \Big[\, \ell\big(\theta(\alpha;M_a,M_b);\,x_a\big)
       + \ell\big(\theta(\alpha;M_a,M_b);\,x_b\big) \Big],
  \label{eq:loss}
\end{equation}
$x_a\sim D_A^{\mathrm{train}}$, $x_b\sim D_B^{\mathrm{train}}$. The two
loss terms are \emph{unweighted} by $\alpha$: weighting by, say,
$(1-\alpha)$ and $\alpha$ would make the gradient on $M_b$ vanish
whenever a step happens to sample $\alpha$ near $0$, so $M_b$ would
never receive a usable training signal at exactly the operating points
where it needs to already be correct.

$\alpha_{\min}=\alpha_{\max}$ recovers a single fixed blend ratio,
sampled with probability $1$ every step -- this is our default,
$\alpha_{\min}{=}\alpha_{\max}{=}0.5$, the same design choice LoRA
makes with its own scaling hyperparameter: fixed once before training
rather than varied during it \cite{hu2021lora}. We separately tried
the wider setting $\alpha_{\min}{=}0.05,\ \alpha_{\max}{=}0.95$,
resampled fresh every step, on the reasoning that a single trained
pair should then be valid as a dial across the whole range rather than
only near where it was trained. Both are trained identically otherwise
(same steps, learning rate, batch); Section~\ref{sec:results} reports
both. A separate exploratory check trained several further
single-fixed-ratio variants spanning the available range: training
near either extreme of that range generalized poorly, working only
close to that same extreme, while every more moderate choice
generalized across the whole range without collapsing. The likely
mechanism is that an extreme ratio starves one model's correction of
meaningful gradient for the entire run, while a moderate ratio keeps
both engaged throughout.

Gradients with respect to $(a_j,b_j)$ flow through
\eqref{eq:mergeK}--\eqref{eq:mergeQ} by ordinary reverse-mode autodiff
over the frozen base network (\texttt{torch.func.functional\_call});
the only non-trivial local derivatives are $\partial M_j/\partial a_j
= I$, $\partial M_j/\partial b_j=J$, and the standard matrix-inverse
identity $\partial(M_j^{-1})/\partial\theta = -M_j^{-1}(\partial
M_j/\partial\theta)M_j^{-1}$ for the key-side term -- no custom
backward pass is required, and none of it touches the frozen
backbone's own parameters.

\subsection{Decoupling the blend ratio}
\label{sec:independent}
The convex form \eqref{eq:mergeK}--\eqref{eq:mergeQ} ties the two
weights so they always sum to one: turning up how much of $B$ is used
necessarily turns down how much of $A$ is used by the same amount. A
task-arithmetic-style generalization relaxes exactly this: give each
fine-tune its own independent dial, $s_a$ and $s_b$, for how much of
its own change from the shared base to add in, with no requirement
that the two sum to anything in particular:
\begin{equation}
  \theta(s_a,s_b) = \theta_0 + s_a\big(\phi(\theta_A;M_a)-\theta_0\big)
                            + s_b\big(\phi(\theta_B;M_b)-\theta_0\big),
  \label{eq:independent}
\end{equation}
where $\phi(\theta;M)$ denotes $\theta$ with its K/Q weights replaced
by their $M$-rotated form. Setting $s_a{=}1{-}\alpha,\ s_b{=}\alpha$
recovers \eqref{eq:mergeK}--\eqref{eq:mergeQ} exactly (the two
$\theta_0$ terms cancel), so \eqref{eq:independent} is a strict
generalization, not a different method. Nothing in the construction
guarantees that pushing both $s_a,s_b$ high at once approaches both
fine-tunes' own single-model scores simultaneously -- summing two
uncontrolled deltas with no TIES-style conflict resolution could just
as easily interfere destructively. Section~\ref{sec:sasb-ablation}
tests this directly, reusing already-trained $M_a,M_b$ with no
retraining.

\section{Experimental Setup}
\label{sec:setup}

\subsection{Base model and fine-tunes}
All experiments use Qwen2.5-1.5B-Instruct \cite{qwen2025technical}
(28 layers, 12 query heads, 2 key/value heads, head dim 128,
grouped-query attention with group size 6, bias on $q$/$k$ projections)
as the shared base. Fine-tune $A$ is \texttt{arissuga/aurum-brain-ai}
\cite{aurummodel} (Indonesian and code). Fine-tune $B$ is
\texttt{SakanaAI/TinySwallow-1.5B-Instruct} \cite{tinyswallowmodel}
(Japanese, TAID-distilled from a Qwen2.5-32B-Instruct teacher). Out of
five Qwen2.5-1.5B-Instruct candidates checked against
\eqref{eq:mutualgap} on a native-language subject-knowledge benchmark
per language (INCLUDE \cite{include2024} for Indonesian/Japanese/Russian,
C-Eval \cite{huang2023ceval} for Chinese, $n{=}100$) -- which also
included a Russian specialist and a Chinese text-correction specialist
-- $A,B$ was the only pair satisfying it; full numbers are in the
released repository \cite{releaserepo}.

\subsection{Training data}
Fine-tune $A$'s retention set is a small Indonesian/code instruction
set used for this fine-tune throughout our broader experiments.
Fine-tune $B$'s retention set is 220 filtered examples from
\texttt{databricks-dolly-15k-ja} \cite{dollyja2023}, a Japanese
translation of Databricks Dolly 15k -- deliberately \emph{not} $B$'s
own TAID-distillation data (to avoid training the retention transform
on the exact distribution it was distilled from) and deliberately
general-purpose rather than code-specific (to avoid confounding with
$A$'s own specialty).

\subsection{Baselines}
Plain weight averaging ($\alpha$-interpolation, no rotation, the
$M{=}I$ member of our own commutant class). TIES and DARE-TIES via the
official \texttt{mergekit} toolkit \cite{mergekit,mergekitgithub}
(density $0.5$, weight $0.5/0.5$ each side, base-relative,
\texttt{bfloat16}; run on CPU, as \texttt{--cuda} exceeded an 8\,GB
card's memory during TIES's magnitude-sparsification step). We
compare against TIES and DARE-TIES, not plain averaging, as the
serious baseline: plain averaging is a passive reference point within
our own method's parameter space, not an independent competing tool.

\subsection{Benchmarks and training configuration}
MMLU \cite{hendrycks2021mmlu} (57 subtasks, $n{=}100$ each,
$5{,}700$ total) and ARC-Challenge \cite{clark2018arc} ($n{=}100$) via
lm-evaluation-harness \cite{gao2024lmeval}; INCLUDE-Indonesian and
INCLUDE-Japanese \cite{include2024} are evaluated on their full
available splits ($n{=}450$ and $n{=}299$ respectively), not a
$100$-example subsample -- larger than the other two tasks, which
matters for the noise-floor discussion in
Section~\ref{sec:limitations}. Training: 500 steps,
batch size 2 examples per task per step, AdamW, learning rate $3{\times}
10^{-3}$, gradient-norm clip $1.0$; $\alpha_{\min}{=}\alpha_{\max}{=}0.5$
(fixed) by default, and separately $\alpha\sim U[0.05,0.95]$ resampled
every step for the randomized comparison. All numbers in this paper were produced with the public
reference implementation \cite{releaserepo}, so every table below is
directly reproducible from it.

\section{Results}
\label{sec:results}

\subsection{Main comparison}
Table~\ref{tab:main} is the agreed comparison: base, both fine-tunes
alone, TIES, DARE-TIES, and our method -- trained with a single fixed
blend ratio, $\alpha_{\min}{=}\alpha_{\max}{=}0.5$ -- evaluated at five
merge-time blend ratios.

\begin{table}[h]
\centering
\caption{Indonesian/Japanese pair, four benchmarks. ``Ours'' trains
$M_a,M_b$ at fixed $\alpha{=}0.5$ and merges at the listed $\alpha$.
\emph{A:B} is the blend fraction assigned to fine-tune $A$ (Indonesian)
vs.\ fine-tune $B$ (Japanese): $(1{-}\alpha){:}\alpha$ for our method,
the configured mergekit weight for TIES/DARE-TIES, and $1{:}0$/$0{:}1$
for a fine-tune alone. Bold marks the best of our method's five rows
in that column (ties bolded together) -- see
Section~\ref{sec:results} for how these compare against TIES.}
\label{tab:main}
\begin{tabular}{lccccc}
\toprule
condition & A:B & MMLU & ARC-C & Indonesian & Japanese \\
\midrule
base                        & --      & 0.611 & 0.43 & 0.536 & 0.539 \\
fine-tune $A$ alone         & 1.0:0.0 & 0.613 & 0.42 & 0.536 & 0.542 \\
fine-tune $B$ alone         & 0.0:1.0 & 0.567 & 0.41 & 0.456 & 0.602 \\
mergekit TIES               & 0.5:0.5 & 0.553 & 0.42 & 0.442 & 0.508 \\
mergekit DARE-TIES          & 0.5:0.5 & 0.465 & 0.30 & 0.353 & 0.318 \\
\emph{ours}, $\alpha{=}0.1$ & 0.9:0.1 & \textbf{0.608} & \textbf{0.46} & \textbf{0.527} & 0.555 \\
\emph{ours}, $\alpha{=}0.3$ & 0.7:0.3 & 0.599 & \textbf{0.46} & 0.511 & 0.579 \\
\emph{ours}, $\alpha{=}0.5$ & 0.5:0.5 & 0.585 & 0.43 & 0.496 & 0.562 \\
\emph{ours}, $\alpha{=}0.7$ & 0.3:0.7 & 0.581 & \textbf{0.46} & 0.496 & 0.599 \\
\emph{ours}, $\alpha{=}0.9$ & 0.1:0.9 & 0.576 & 0.42 & 0.478 & \textbf{0.619} \\
\bottomrule
\end{tabular}
\end{table}

TIES lands below the untouched base model on three of four
benchmarks; DARE's additional drop-and-rescale step on top of TIES
makes every column noticeably worse still, not better. Against TIES --
the real tool to beat -- ours does not lose to TIES on a single cell
in this table. Bold cells mark the best of our own five blend ratios
per column, not a claim about the margin over TIES specifically:
ARC-C is the one column where even that best point's margin over TIES
($0.04$) stays inside this paper's own noise floor
($\mathrm{SE}\approx0.05$ at $n{=}100$, Section~\ref{sec:limitations}),
so that column is better read as a wash than a win. MMLU, Indonesian,
and Japanese margins clear their respective (larger-sample) noise
floors at most blend ratios and are what this comparison actually
rests on. We also trained the randomized-$\alpha$ variant of
Section~\ref{sec:training} on this same pair; it beat DARE-TIES at
every blend ratio tested, but not TIES outright -- it loses to TIES on
Japanese at $\alpha{=}0.5$ and on ARC-C at $\alpha{=}0.9$, by narrow
margins, against clearly wider winning margins everywhere else and
throughout the fixed-$\alpha$ run above (full numbers in the released
repository \cite{releaserepo}).

\subsection{What the learned transforms look like}
Table~\ref{tab:main}'s gains only mean what Section~\ref{sec:method}
claims if the trained $M_a,M_b$ actually moved away from the identity
they were initialized at. Every learned block is, by
Corollary~\ref{cor:conditioning}, exactly a scaled rotation -- that is
never in question, it holds by construction regardless of what
training does. What is an empirical question is how large the
rotation is, and whether it comes with a scale close to $1$ (a nearly
pure rotation) or a scale substantially different from $1$ (rotation
entangled with a real magnitude change). Table~\ref{tab:rotation-summary}
and Figure~\ref{fig:rotation-stats} give the answer, across all
$3{,}584$ (layer, kv-head, RoPE-pair) blocks per model.

\begin{table}[h]
\centering
\caption{Summary statistics for the learned blocks (fixed-$\alpha{=}0.5$ training).}
\label{tab:rotation-summary}
\begin{tabular}{lcccc}
\toprule
& median $|\theta|$ & 90th pctl. $|\theta|$ & max $|\theta|$ & \% $>45^\circ$ \\
\midrule
$M_a$ (Indonesian) & $9.7^\circ$ & $24.4^\circ$ & $86.2^\circ$ & $1.0\%$ \\
$M_b$ (Japanese)   & $9.8^\circ$ & $25.0^\circ$ & $75.2^\circ$  & $0.9\%$ \\
\bottomrule
\end{tabular}
\end{table}

The typical block barely moves: median rotation is about $10^\circ$
for both models, 90\% stay under $25^\circ$, and for these small-angle
blocks the scale is close to $1$ (mean $0.98$) -- close to a
pure rotation, as the theory motivates. But the large rotations (up to
$86^\circ$) belong to a small tail, under $1\%$ of blocks, and that
tail is {\bf not} a clean rotation: restricted to blocks with
$|\theta|{>}45^\circ$, mean scale drops to $0.55$ ($M_a$) and $0.60$
($M_b$) -- those specific blocks shrink vectors substantially on top
of rotating them. Figure~\ref{fig:rotation-stats} shows this is not
incidental: rotation angle and scale deviation are entangled across
the whole population, not independent. Nothing in \eqref{eq:loss}
penalizes moving far from identity, so we cannot rule out that this
small tail reflects unregularized optimizer capacity -- reshaping a
few heads' behavior more broadly than a pure geometric realignment --
rather than a confirmed, isolated misalignment correction at exactly
those blocks. The randomized-$\alpha$ run shows the same qualitative
pattern (median $\approx\!10^\circ$, a sub-2\% tail up to
$80$--$106^\circ$ with the same scale-shrinkage entanglement), so this
is not an artifact of which training regime was used.

\begin{figure}[h]
\centering
\includegraphics[width=\textwidth]{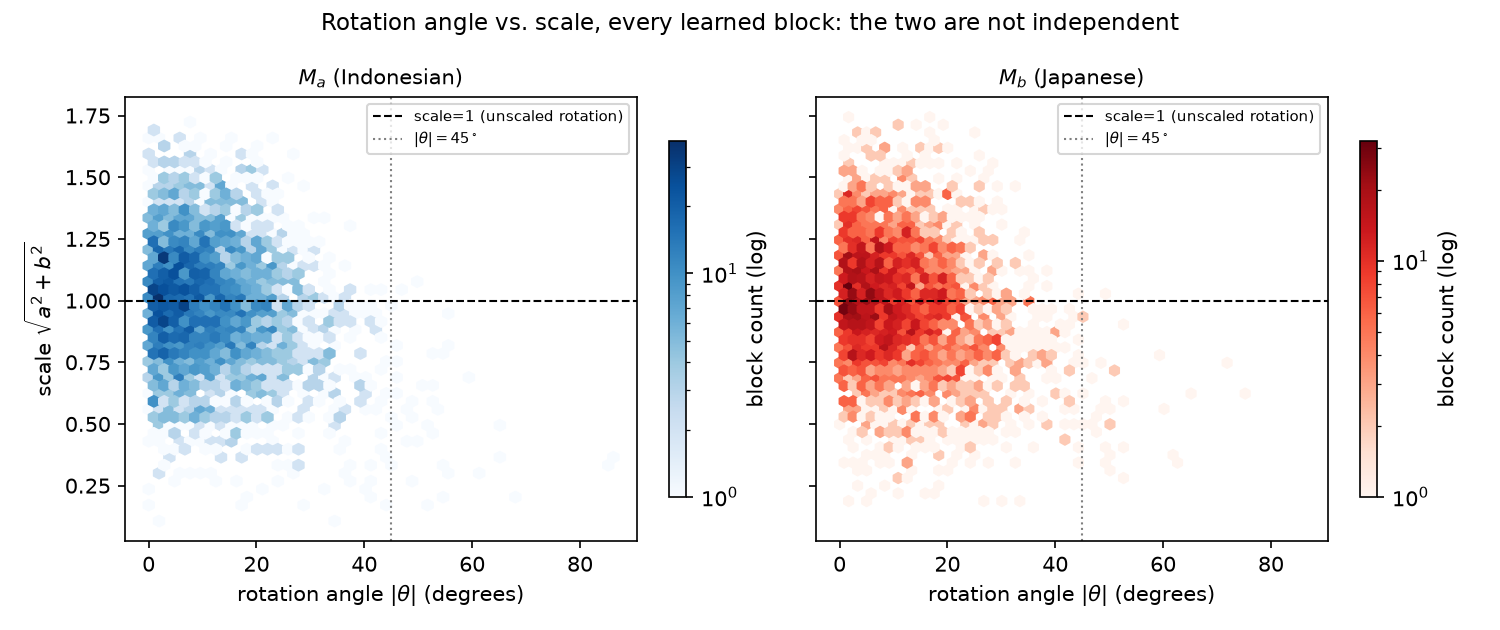}
\caption{Rotation angle vs.\ scale for every learned block. Most mass
sits at small angle with scale near $1$; the sparse tail beyond
$45^\circ$ sits almost entirely below scale $1$, showing the large
rotations are not scale-preserving.}
\label{fig:rotation-stats}
\end{figure}

\subsection{Alpha as a dial}
Table~\ref{tab:alphasweep} extends the fixed-$\alpha{=}0.5$ training's
merge-time sweep to all nine tenths-wide points.

\begin{table}[h]
\centering
\caption{Extended alpha sweep, fixed-$\alpha{=}0.5$ training, same four
benchmarks. \emph{A:B} is $(1{-}\alpha){:}\alpha$, the blend fraction
assigned to fine-tune $A$ (Indonesian) vs.\ fine-tune $B$ (Japanese).
Bold marks the best value in each column across all nine points.}
\label{tab:alphasweep}
\begin{tabular}{lccccc}
\toprule
$\alpha$ & A:B & MMLU & ARC-C & Indonesian & Japanese \\
\midrule
0.1 & 0.9:0.1 & \textbf{0.608} & \textbf{0.46} & 0.527 & 0.555 \\
0.2 & 0.8:0.2 & 0.604 & 0.44 & \textbf{0.529} & 0.572 \\
0.3 & 0.7:0.3 & 0.599 & \textbf{0.46} & 0.511 & 0.579 \\
0.4 & 0.6:0.4 & 0.593 & 0.44 & 0.502 & 0.579 \\
0.5 & 0.5:0.5 & 0.585 & 0.43 & 0.496 & 0.562 \\
0.6 & 0.4:0.6 & 0.581 & 0.43 & 0.496 & 0.569 \\
0.7 & 0.3:0.7 & 0.581 & \textbf{0.46} & 0.496 & 0.599 \\
0.8 & 0.2:0.8 & 0.578 & 0.44 & 0.502 & \textbf{0.632} \\
0.9 & 0.1:0.9 & 0.576 & 0.42 & 0.478 & 0.619 \\
\bottomrule
\end{tabular}
\end{table}

Alpha behaves as a scaling factor, not a precise dial: the broad trend
on each metric follows whichever fine-tune is stronger there. MMLU
(where $A$ is stronger) is exactly monotonically non-increasing across
all nine points; Indonesian (also $A$-favored) trends the same way but
with small bumps (e.g.\ $\alpha{=}0.2$ and $0.8$ tick back up); Japanese
($B$-favored) trends up overall but dips around $\alpha{=}0.5$ and
peaks at $\alpha{=}0.8$, not $0.9$; ARC-C, where $A$ and $B$ are nearly
tied, shows no consistent trend at all -- consistent with there being
no dominant model to trend toward. Anyone sweeping this table will
still see the expected direction clearly on the metrics with a real
gap; it just should not be read as a guarantee of strict monotonicity
at every tenth. We treat $\alpha$ as a hyperparameter to
sweep and commit to for a deployment's actual need, not a quantity
promised to move smoothly between two known-good endpoints. Because
merging at a new $\alpha$ from an already-trained $M_a,M_b$ costs a
forward pass over the merge equations and nothing else -- no
retraining -- this sweep is cheap to run on a small held-out sample
from the target deployment's own data before committing to a value,
rather than assumed from a benchmark run on a different task mix. The
randomized-$\alpha$ training of Section~\ref{sec:training}, evaluated
the same way, showed larger peak-to-trough swings on the metrics with
a real gap between $A$ and $B$ (Indonesian $1.5\times$, Japanese
$1.9\times$), essentially the same swing on ARC-C where neither model
dominates ($1.0\times$), and a smaller difference on MMLU ($1.3\times$)
-- not a uniform $1.5$--$2\times$ effect across every metric. Full
numbers are in the released repository \cite{releaserepo}, not
reproduced here.

Training itself is stable throughout: over 500 steps at fixed
$\alpha{=}0.5$, both retention losses fall and stay bounded
(Table~\ref{tab:losscurve}, mean over the first and last 20 steps),
consistent with the low-dimensional, always-invertible parameterization
of Section~\ref{sec:transforms}.

\begin{table}[h]
\centering
\caption{Training loss, start vs.\ end (mean of first/last 20 of 500 steps).}
\label{tab:losscurve}
\begin{tabular}{lcc}
\toprule
& start & end \\
\midrule
loss$_a$ (Indonesian retention) & 0.470 & 0.293 \\
loss$_b$ (Japanese retention)   & 1.053 & 0.952 \\
\bottomrule
\end{tabular}
\end{table}

\subsection{Decoupling the blend ratio}
\label{sec:sasb-ablation}
We evaluate \eqref{eq:independent} at $(s_a,s_b)\in\{(0.7,0.7),
(0.9,0.9),(1.0,1.0)\}$, reusing the already-trained $M_a,M_b$ from the
randomized-$\alpha$ run (Section~\ref{sec:training}) with no
retraining, to test whether pushing both coefficients high at once
approaches both fine-tunes' single-model scores simultaneously rather
than trading one off against the other. Table~\ref{tab:sasb} gives the
result.

\begin{table}[h]
\centering
\caption{Independent (non-tied) $s_a,s_b$.}
\label{tab:sasb}
\begin{tabular}{lcccc}
\toprule
$(s_a,s_b)$ & MMLU & ARC-C & Indonesian & Japanese \\
\midrule
$(0.7,0.7)$ & 0.477 & 0.40 & 0.367 & 0.421 \\
$(0.9,0.9)$ & 0.324 & 0.36 & 0.276 & 0.308 \\
$(1.0,1.0)$ & 0.284 & 0.29 & 0.264 & 0.298 \\
\bottomrule
\end{tabular}
\end{table}

The outcome is unambiguous and monotonic in the wrong direction: every
benchmark degrades as both coefficients rise together, well below any
tied-$\alpha$ point in Table~\ref{tab:main} and approaching
random-guessing territory on a four-way multiple-choice task by
$(1.0,1.0)$. Pushing $s_a,s_b$ both high does not yield ``the best of
both'' fine-tunes; it compounds two large, uncontrolled deltas with no
mechanism to resolve conflict between them, exactly the failure mode
TIES's sign-election and magnitude-trimming exist to prevent. The tied
convex constraint $s_a{+}s_b{=}1$ is not an arbitrary restriction of
\eqref{eq:independent} -- removing it does measurable harm.

\section{Discussion}

Two patterns recur across every fine-tune pair we have examined under
this framework, both visible already in the candidate check of
Section~\ref{sec:setup}. First, an ``alignment tax'': some fine-tunes score below base on every
single axis, including their own claimed specialty, a general
regression from aggressive full-parameter SFT rather than a genuine
capability trade-off -- $\Delta<0$ in the language of
Section~\ref{sec:mutual-gap}, disqualifying the pair outright regardless
of how it compares to any other candidate. Second, fine-tune $A$ shows
essentially zero gain over base on the harder, subject-knowledge
benchmark ($0.536$ vs.\ $0.536$) while showing a real gain on
Belebele-style reading comprehension in our earlier, unpublished
pre-check sweep -- light instruction-tuning changes surface fluency
more readily than it changes deep subject knowledge, which is exactly
why \eqref{eq:mutualgap} must be checked on the harder benchmark rather
than assumed from a fine-tune's stated specialty or from a shallower
benchmark alone.

This method deliberately restricts the learned correction to
query/key projections -- the one place RoPE imposes a rotation that
plain averaging can silently disrupt without a training loss ever
seeing it. Value, output, MLP, and embedding weights carry no
comparable structure, which is also why plain averaging is a
categorically safer default there than it is for K/Q: there is no
rotation for it to misalign in the first place. Whether those tensors
would still benefit from their own learned, unconstrained correction
is open -- nothing rotates them, so no RoPE-commutant restriction
would apply, but adding one would also give up the free safety
guarantee Proposition~\ref{prop:pair-commutant} gets K/Q for a
$d$-dimensional, rather than $d^2$-dimensional, search. We leave this
to future work rather than assuming an answer either way.

\section{Limitations}
\label{sec:limitations}

For this pair, DARE-TIES's extra drop-and-rescale step does not pay
off: it underperforms plain TIES on every benchmark tested
(Table~\ref{tab:main}), which is not the result we expected going in.
All results are for one
architecture (Qwen2.5-1.5B) and one verified mutual-gap pair; the
candidate check is necessary precisely because we found, and rejected,
superficially attractive pairs that failed it (Section~\ref{sec:setup}),
so the specific numbers in
Section~\ref{sec:results} should not be assumed to transfer to an
unrelated pair without re-running that check. The independent
$s_a,s_b$ results (Section~\ref{sec:sasb-ablation}) reuse $M_a,M_b$
trained only under the randomized-$\alpha$ objective, and only from a
single training run; we have not checked whether the fixed-$\alpha$
transforms behave the same way when decoupled, nor across multiple
seeds of either. The fixed-vs-randomized comparison itself
(Section~\ref{sec:training}) is also a single run of each: fixed-$\alpha{=}0.5$
training matched or beat the randomized variant at essentially every
blend ratio tested on this pair, but with one seed each we cannot rule
out that this reflects this particular pair or this particular pair of
training runs rather than a general property of fixing the blend
ratio.

Sample size, and so the binomial noise floor
$\mathrm{SE}\approx\sqrt{0.5\times0.5/n}$ on a single-task accuracy,
differs by task: ARC-C at $n{=}100$ gives $\mathrm{SE}\approx0.05$;
Indonesian at $n{=}450$ and Japanese at $n{=}299$ are both larger
samples, giving $\mathrm{SE}\approx0.024$ and $\mathrm{SE}\approx0.029$
respectively; MMLU aggregates 57 subtasks at $n{=}100$ each
($5{,}700$ examples total, $\mathrm{SE}\approx0.007$) and is the one
comparison in this paper solid enough to treat as conclusive on its
own. Table~\ref{tab:main} and Table~\ref{tab:alphasweep} bold margins
against this per-task floor rather than one number applied uniformly;
ARC-C is the one column where several per-task margins in
Section~\ref{sec:results} fall short of it and should be read as
suggestive rather than statistically established.

\section{Conclusion}

Rotary position embedding is not a minor detail you can simply work around after the fact when aligning two models' attention subspaces—it determines, exactly, which corrections are safe to apply at all. We derived this class of corrections in closed form and built a dual-sided merging method around it that trains as a post-hoc dial, rather than producing a single fixed-ratio checkpoint. Alongside this, we introduced a mathematical condition that can be checked before training to verify whether two fine-tunes actually disagree enough to be worth aligning in the first place. When tested on a verified pair, our default setup—which trains at a single fixed blend ratio, mirroring the design choice LoRA makes with its own scaling hyperparameter—never loses to the official reference implementations of TIES and DARE-TIES at any blend ratio tested, and it clears our own statistical noise floor on most benchmarks. A randomized-blend-ratio variant was also tested and beat DARE-TIES throughout, though not TIES outright. Finally, a further ablation confirms that the method's tied blend ratio is not an arbitrary restriction: decoupling it into two independently scaled factors degrades every benchmark, rather than combining the best of both fine-tunes.

\end{document}